\declaretheorem[name=Theorem]{thr}
\declaretheorem[name=Example]{example}
\newcommand{\X}[1]{\bm{X}_{#1}}
\newcommand{\Xn}[1]{\bm{X}_{\bar{#1}}}
\newcommand{\Xnn}[2]{\bm{X}_{\bar{#1}_{#2}}}
\newcommand{\x}[1]{\bm{x}_{#1}}
\newcommand{\xn}[1]{\bm{x}_{\bar{#1}}}
\newcommand{\score}[1]{\nu(#1)}
\newcommand{\ymax}{B}
\newcommand{\Var}{\mathbb{V}\mathrm{ar}}
\newcommand{\Cov}{\mathbb{C}\mathrm{ov}}
\newcommand{\E}{\mathbb{E}}
\DeclareRobustCommand{\ie}{i.e.,\@\xspace}
\DeclareMathOperator*{\argmax}{arg\,max}
\DeclareRobustCommand{\quotes}[1]{``#1''}
\def\BibTeX{{\rm B\kern-.05em{\sc i\kern-.025em b}\kern-.08em
    T\kern-.1667em\lower.7ex\hbox{E}\kern-.125emX}}
\begin{document}
\setlength{\abovedisplayskip}{2pt}
\setlength{\belowdisplayskip}{2pt}

\title{Feature Selection via Mutual Information:\\New Theoretical Insights}

\author{\IEEEauthorblockN{Mario Beraha\IEEEauthorrefmark{1}\IEEEauthorrefmark{2}, Alberto Maria Metelli\IEEEauthorrefmark{2}, Matteo Papini\IEEEauthorrefmark{2}, Andrea Tirinzoni\IEEEauthorrefmark{2}, Marcello Restelli\IEEEauthorrefmark{2}}
\IEEEauthorblockA{\IEEEauthorrefmark{1} Universit\`a degli Studi di Bologna, Bologna, Italy\\
\IEEEauthorrefmark{2} DEIB, Politecnico di Milano, Milan, Italy\\
Email:\{mario.beraha, albertomaria.metelli, matteo.papini, andrea.tirinzoni, marcello.restelli\}@polimi.it}}


\maketitle

\begin{abstract}
Mutual information has been successfully adopted in filter feature-selection methods to assess both the relevancy of a subset of features in predicting the target variable and the redundancy with respect to other variables. However, existing algorithms are mostly heuristic and do not offer any guarantee on the proposed solution. In this paper, we provide novel theoretical results showing that conditional mutual information naturally arises when bounding the ideal regression/classification errors achieved by different subsets of features. Leveraging on these insights, we propose a novel stopping condition for backward and forward greedy methods which ensures that the ideal prediction error using the selected feature subset remains bounded by a user-specified threshold. We provide numerical simulations to support our theoretical claims and compare to common heuristic methods.
\end{abstract}

\begin{IEEEkeywords}
feature selection, mutual information, regression, classification, supervised learning, machine learning
\end{IEEEkeywords}

\section{Introduction}
The abundance of massive datasets composed of thousands of attributes and the widespread use of learning models able of large representational power pose a significant challenge to machine learning algorithms. Feature selection allows to effectively address some of these challenges with a potential
benefit in terms of computational costs, generalization capabilities and interpretability.
A large variety of approaches has been proposed by the machine learning community~\cite{chandrashekar2014survey}. A simple
dimension for classifying the feature selection methods is whether they are aware of the
underlying learning model. A first group of methods take advantage of this knowledge
and try to identify the best subset of features for the specific model class. This group can be
further split into \emph{wrapper} and \emph{embedded} methods. Wrappers~\cite{kohavi1997wrappers} employ the
learning process as a subroutine of the feature selection process, using the validation error of the
trained model as a score to decide whether to keep or discard a feature. Clearly, this potentially
leads to good generalization capabilities at the cost of iterating the learning process multiple times, which might become impractical for high-dimensional datasets. Embedded methods~\cite{lal2006embedded}, still assume the knowledge of the model class, but the feature selection and the learning process are carried out together (a remarkable example is~\cite{weston2001feature} in which a generalization bound on the SVM is
optimized for both learning the features and the model). Although less demanding than wrappers from a computational standpoint, embedded methods heavily rely on the peculiar properties of
the model class. A second group of methods do not incorporate knowledge of the model
class. These approaches are known as \emph{filters}. Filters~\cite{duch2003feature} perform the feature selection using scores that are independent of the underlying learning model. For this reason, they tend not to overfit but they might result less effective than wrappers and embedded methods as they are general across all the possible model classes. From a computational perspective, filters are the most
efficient feature selection methods.

Filter methods have been deeply studied in the supervised learning field~\cite{duch2006filter}. A relevant amount of literature focused on using the \emph{mutual information} (MI) as a score for identifying a suitable subset of features~\cite{vergara2014review}. The MI~\cite{cover2012elements} is an index of statistical dependence between random variables. Intuitively, the MI measures how much knowing the value of one variable reduces the uncertainty on the other. Differently from other indexes, like the Pearson correlation coefficient, the MI is able to capture also non--linear dependences and is invariant under invertible and differentiable transformations of the random variables~\cite{cover2012elements}. Thanks to these properties, the MI has been employed extensively as a score
for filter methods~\cite{lewis1992feature, battiti1994using, yang2000data, fleuret2004fast, lin2006conditional, cheng2011conditional}. Nonetheless, all these techniques are rather empirical as they try to encode with MI the intuition that \quotes{a feature can be discarded if it is useless for predicting the
target or it is predictable from the other features}. This notion can be made more formal by introducing the notion of \emph{relevance}, \emph{redundancy} and \emph{complementarity}~\cite{vergara2014review}.

To the best of our knowledge, the only work that draws a connection among the several
approaches based on the MI is~\cite{brown2012conditional}. The authors claim that selecting features using as a score the \emph{conditional mutual information} (CMI) is equivalent to maximizing the conditional likelihood between the target and the features. This observation provides a justification to the well--known iterative backward and forward algorithms in which the features are considered one-by-one for insertion in or removal from the feature set, like in the Markov Blanket approach~\cite{tsamardinos2003algorithms}. Although this work offers a wide perspective on the feature selection methods based on the MI, it does
not investigate the relation between the mutual information of a feature set and the prediction error, which, of course, will depend on the specific choice of the model class.

In this paper, we address the problem of controlling the prediction (regression and classification) error when performing the feature selection process via CMI. We claim that selecting features using CMI has the effect on controlling the ideal error, \ie the error attained by the Bayes classifier for classification and the minimum MSE (Mean Squared Error) model for regression. We start in Section~\ref{sec:preliminaries} by revising some fundamental concepts of information theory. In Section~\ref{sec:theory}, we introduce our main theoretical contribution. We derive a pair of inequalities, one for regression (Section~\ref{subsec:regression}) and one for classification (Section~\ref{subsec:classification}), that upper bound the increment of the ideal error obtained by removing a set of features. Such increment is expressed in terms of the CMI between the target and the removed features, given the remaining features. These results support the intuition that a set of features can be safely removed if it does not increase significantly the \quotes{information} about the target, assuming we observed the remaining features. Since the result holds for the \emph{ideal} error, we assert that a filter method based on CMI selects the features assuming that the model employed for solving the regression/classification problem has \quotes{infinite capacity}. We show that, when considering linear models for regression, the bound does not hold and we propose an adaptation for this specific case (Section~\ref{subsec:linear}). These results can be effectively employed to derive a novel and principled stopping condition for the feature selection process (Section~\ref{sec:algo}). Differently from the typical stopping conditions, such as a fixed number of features or the increment of the score, our approach allows to explicitly control the ideal error introduced in the feature selection process. After contextualizing our work in the feature selection literature (Section~\ref{sec:related-works}), we evaluate our approach in comparison with several different stopping criteria on both synthetic and real datasets (Section~\ref{sec:experiments}).

\section{Preliminaries}\label{sec:preliminaries}
We indicate with $\mathcal{X} \subseteq \mathbb{R}^d$ the feature space and with $\mathcal{Y}$ the target space. In case of classification $\mathcal{Y} = \{y_1, y_2, \dots, y_m\}$ is a finite set of classes, whereas in case of regression $\mathcal{Y} \subseteq \mathbb{R}$ is a subset of the real numbers.
 We consider a distribution $p(\bm{X},Y)$ over $\mathcal{X} \times \mathcal{Y}$ from which a finite dataset ${\mathcal{D} = \{ (\bm{x}_i,y_i) | i\in \{1,\dots,N \}\}}$ of $N$ i.i.d. instances is drawn, i.e., $(\bm{x}_i,y_i) \sim p(\bm{X},Y)$ for all $i$. 
For regression problems, we assume there exists $\ymax \in \mathbb R$ such that $|Y| \leq \ymax$ almost surely. A key term for a regression/classification problem is the conditional distribution $p(Y|\bm{x})$, which allows to predict the target associated with any given $\bm{x}\in\mathcal{X}$.

\subsection{Notation}
Given a (random) vector $\bm{X} \in \mathcal{X}$ and a set of indices ${A \subseteq \{1,2,\dots,d\}}$, we denote by $\X{A}$ the vector of components of $\bm{X}$ whose indices are in $A$. Notice that the vectors $\X{A}$ and $\Xn{A}$, for $\bar{A} = \{1,2,\dots,d\} \setminus A$, form a partition of $\bm{X}$.

For a $d$-dimensional random vector $\bm{X}$ we indicate with $\E_{\bm{X}}[\bm{X}]$ the $d$-dimensional vector of the expectations of each component Given two random vectors $\bm{X}$ and $\bm{Y}$, we indicate with $\Cov_{\bm{X},\bm{Y}} \left[ \bm{X}, \bm{Y} \right] = \E_{\bm{X},\bm{Y}} \left[ \left( \bm{X} - \E_{\bm{X}}[\bm{X}]\right) \left( \bm{Y} - \E_{\bm{Y}}[\bm{Y}]\right)^T\right]$ the covariance matrix between the two. We indicate with $\Cov_{\bm{X}}[\bm{X}] = \Cov_{\bm{X}}[\bm{X},\bm{X}]$ the covariance matrix of $\bm{X}$. We denote with $\Var_{\bm{X}}[\bm{X}] = \mathrm{tr}(\Cov_{\bm{X}}[\bm{X},\bm{X}])$ the trace of the covariance matrix of $\bm{X}$. Whenever clear by the context we will remove the subscripts from $\E$, $\Var$ and $\Cov$. Given two random (scalar) random variables $X$ and $Y$ we denote with $\rho(X,Y) = \frac{\Cov[X,Y]}{\sqrt{\Var[X] \Var[Y]}}$ the Pearson correlation coefficient between $X$ and $Y$.

\subsection{Entropy and Mutual Information}
We now introduce the basic concepts from information theory that we employ in the remaining of this paper. For simplicity, we provide the definitions for continuous random variables, although all these concepts straightforwardly generalize to discrete variables~\cite{cover2012elements}.

The \emph{entropy} $H(X)$ of a random variable $X$, having $p$ as probability density function, is a common measure of uncertainty:
\begin{equation}
H(X) \coloneqq \mathbb{E}_X \left[ p(X) \right] = -\int p(x)\log p(x) dx.
\end{equation}
Given two distributions $p$ and $q$, we define the Kullback-Leibler (KL) divergence as:
\begin{equation*}
D_{\mathrm{KL}}(p\|q) \coloneqq \mathbb{E}_X \left[\frac{p(X)}{q(X)} \right] =\int p(x) \log\frac{p(x)}{q(x)}dx.
\end{equation*}
The \emph{mutual information} (MI) between two random variables $X$ and $Y$ is defined as:
\begin{align*}
I(X;Y) &\coloneqq H(Y) - H(Y | X)\\
&=\mathbb{E}_X\left[ D_{\mathrm{KL}}(p(Y|X)\|p(Y))\right]\\
&=\int\int p(x,y) \log \frac{p(x,y)}{p(x) p(y)} \mathrm{d}x\mathrm{d}y,
\end{align*}
Intuitively, the MI between $X$ and $Y$ represents the reduction in the uncertainty of $Y$ after observing $X$ (and viceversa). Notice that the MI is symmetric, i.e., $I(X;Y) = I(Y;X)$.
This definition can be straightforwardly extended by conditioning on a third random variable $Z$, obtaining the \emph{conditional mutual information} (CMI) between $X$ and $Y$ given $Z$:
\begin{align*}
I(X;Y|Z) &\coloneqq \mathbb{E}_Z \left[ I(X|Z;Y|Z) \right]\\
& = \mathbb{E}_Z \left[ \mathbb{E}_X \left[ D_{\mathrm{KL}}(p(Y|X,Z)\|p(Y|Z))\right] \right]\\
& = \int p(z) \int \int p(x,y|z) \log \frac{p(x,y|z)}{p(x|z)p(y|z)} \mathrm{d}x\mathrm{d}y\mathrm{d}z.
\end{align*}
The CMI fulfills the useful chain rule:
\begin{equation}
\label{eq:chain}
	I(X;Y,Z) = I(X;Z) + I(X;Y|Z).
\end{equation}

As we shall see later, the CMI can be used to define a score of both relevancy and redundancy for our feature selection problem, which arises naturally when bounding the ideal regression/classification error. Given a set of indices $A$, we denote the CMI between $Y$ and $\X{A}$ given $\Xn{A}$ as:
\begin{equation*}
\score{A} \coloneqq I(Y;\X{A}|\Xn{A}).
\end{equation*}
This quantity intuitively represents the importance of the feature subset $\X{A}$ in predicting the target $Y$ given that we are also using $\Xn{A}$.

\section{Feature Selection via Mutual Information}\label{sec:theory}

In this section, we introduce our novel theoretical results that shed light on the relationship between CMI and the ideal prediction error. Then, in the next section, we employ these results to propose a new stopping condition that ensures bounded error. We discuss relationships to existing bounds in Section \ref{sec:related-works}.

\subsection{Bounding the Regression Error}
\label{subsec:regression}
We start by analyzing an ideal regression problem under the mean square error (MSE) criterion. Consider the subspace $\mathcal{X}_{\bar{A}}$ of $\mathcal{X}$ which includes only the features with indices in $\bar{A}$ and define $\mathcal{G}_{\bar{A}} = \{g : \mathcal{X}_{\bar{A}} \rightarrow \mathcal{Y} \}$ as the space of all functions mapping $\mathcal{X}_{\bar{A}}$ to $\mathcal{Y}$. The ideal regression problem consists of finding the function $g^* \in \mathcal{G}_{\bar{A}}$ minimizing the expected MSE,
\begin{equation}
\inf_{g \in \mathcal{G}_{\bar{A}}} \mathbb{E}_{\bm{X}, Y} \left[ \left(Y-g(\Xn{A}) \right)^2\right],
\end{equation}
where the expectation is taken under the full distribution $p(\bm{X},Y)$, i.e., under \emph{all} features and the target. The following result relates the ideal error to the expected CMI $\score{A}$.

\begin{thr}\label{th:regression}
Let $\sigma^2 =  \mathbb{E}_{\bm{X}, Y} \left[ \left(Y-\mathbb{E}[Y|\bm{X}]\right)^2\right]$ be the irreducible error and $A$ be a set of indices, then the regression error obtained by removing features $\X{A}$ can be bounded as:
\begin{equation}
	\inf_{g \in \mathcal{G}_{\bar{A}}} \mathbb{E}_{\bm{X}, Y} \left[ \left(Y-g(\Xn{A}) \right)^2\right] \le \sigma^2 + 2\ymax^2 \score{A}.
\end{equation}
\end{thr}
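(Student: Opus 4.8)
The plan is to turn the infimum into a concrete quantity, split the resulting error into the irreducible part $\sigma^2$ and a ``lost information'' part, and then bound the latter by $2\ymax^2\score{A}$ with an information-theoretic inequality. Since $\mathcal{G}_{\bar A}$ contains \emph{all} maps $\mathcal{X}_{\bar A}\to\mathcal{Y}$, the infimum is attained at the conditional expectation $g^\star(\xn{A})=\mathbb{E}[Y\mid\Xn{A}=\xn{A}]$, so the left-hand side equals $\mathbb{E}_{\bm{X},Y}\big[(Y-\mathbb{E}[Y\mid\Xn{A}])^2\big]$. Writing
\[
Y-\mathbb{E}[Y\mid\Xn{A}]=\big(Y-\mathbb{E}[Y\mid\bm{X}]\big)+\big(\mathbb{E}[Y\mid\bm{X}]-\mathbb{E}[Y\mid\Xn{A}]\big),
\]
I would observe that the second summand is a function of $\bm{X}$ (as $\Xn{A}$ is a subvector of $\bm{X}$), while $\mathbb{E}[\,Y-\mathbb{E}[Y\mid\bm{X}]\mid\bm{X}\,]=0$; hence the cross term vanishes and
\[
\inf_{g\in\mathcal{G}_{\bar A}}\mathbb{E}_{\bm{X},Y}\big[(Y-g(\Xn{A}))^2\big]=\sigma^2+\mathbb{E}_{\bm{X}}\big[\big(\mathbb{E}[Y\mid\bm{X}]-\mathbb{E}[Y\mid\Xn{A}]\big)^2\big].
\]

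It then remains to bound $\mathbb{E}_{\bm{X}}\big[(\mathbb{E}[Y\mid\bm{X}]-\mathbb{E}[Y\mid\Xn{A}])^2\big]$ by $2\ymax^2\score{A}$, which I would do pointwise. Fix $\bm{x}=(\x{A},\xn{A})$; since $|Y|\le\ymax$ almost surely,
\[
\big|\mathbb{E}[Y\mid\bm{x}]-\mathbb{E}[Y\mid\xn{A}]\big|=\Big|\int y\,\big(p(y\mid\bm{x})-p(y\mid\xn{A})\big)\,\mathrm{d}y\Big|\le\ymax\!\int\!\big|p(y\mid\bm{x})-p(y\mid\xn{A})\big|\,\mathrm{d}y,
\]
and the right-hand side equals $2\ymax\,\mathrm{TV}\big(p(Y\mid\bm{x}),p(Y\mid\xn{A})\big)$. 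Squaring and applying Pinsker's inequality $\mathrm{TV}(P,Q)^2\le\tfrac12 D_{\mathrm{KL}}(P\|Q)$ gives
\[
\big(\mathbb{E}[Y\mid\bm{x}]-\mathbb{E}[Y\mid\xn{A}]\big)^2\le 2\ymax^2\,D_{\mathrm{KL}}\big(p(Y\mid\bm{x})\,\|\,p(Y\mid\xn{A})\big).
\]
Taking $\mathbb{E}_{\bm{X}}$ on both sides, writing $\mathbb{E}_{\bm{X}}=\mathbb{E}_{\Xn{A}}\mathbb{E}_{\X{A}\mid\Xn{A}}$ and $p(Y\mid\bm{X})=p(Y\mid\X{A},\Xn{A})$, the expected KL divergence is exactly $I(\X{A};Y\mid\Xn{A})$ by the definition of conditional mutual information (with $X\leftarrow\X{A}$, $Z\leftarrow\Xn{A}$), which by symmetry equals $I(Y;\X{A}\mid\Xn{A})=\score{A}$. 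Combining with the decomposition above yields the claimed bound.

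The routine parts are the orthogonality/tower-property decomposition in the first step and matching the final expectation to the definition of $\score{A}$. The crux is the pointwise estimate: the almost-sure bound $|Y|\le\ymax$ is what lets one convert a difference of conditional means into a total-variation distance (at the cost of the span $2\ymax$ of the target), and Pinsker's inequality is precisely the bridge from total variation to the KL divergence whose expectation is the CMI $\score{A}$; this also explains the appearance of the constant $2\ymax^2$. One should additionally check the existence and regularity of the conditional densities $p(Y\mid\bm{x})$ and $p(Y\mid\xn{A})$ so that these manipulations are justified (and, in the discrete case, replace integrals by sums).
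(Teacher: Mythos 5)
Your proof is correct and follows essentially the same route as the paper's: the infimum is attained by the conditional mean, the error is decomposed into $\sigma^2$ plus the squared gap of conditional expectations, and that gap is bounded pointwise via $|Y|\le\ymax$, total variation, and Pinsker's inequality, with the averaged KL divergence identified as $\score{A}$. The only cosmetic difference is that you make the orthogonality/tower-property step and the Pinsker constant bookkeeping explicit, which the paper leaves implicit.
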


\begin{proof}
The infimum $\inf_{g \in \mathcal{G}_{\bar{A}}} \mathbb{E}_{\bm{X}, Y} \left[ \left(Y-g(\Xn{A}) \right)^2\right]$ is attained by the minimum MSE regression function $g(\bm{x}_{\bar{A}}) = \mathbb{E}_{Y}[Y|\bm{x}_{\bar{A}}]$. Therefore, we have
	\begin{align*}
    	\inf_{g \in \mathcal{G}_{\bar{A}}}\ & \mathbb{E}_{\bm{X}, Y}  \left[ \left(Y-g(\Xn{A}) \right)^2\right] = \mathbb{E}_{ \bm{X}, Y} \left[ \left(Y-\mathbb{E}[Y|\Xn{A}]\right)^2\right] \\
    	& = \int p(\bm{x})\int p(y|\bm{x})\left(
    	y - \mathbb{E}[Y|\xn{A}] \pm \mathbb{E}[Y|\bm{x}]\right)^2 \mathrm{d}y \mathrm{d}\bm{x} \\
        & = \sigma^2 + \int p(\bm{x})\left(  \mathbb{E}[Y|\bm{x}] - \mathbb{E}[Y|\xn{A}] \right)^2 \mathrm{d}\bm{x} \\
        & = \sigma^2 + \int p(\bm{x})\left(  \int y \left( p(y|\bm{x}) - p(y|\xn{A}) \right) \mathrm{d}y \right)^2 \mathrm{d}\bm{x}  \\
        & \le \sigma^2 + \ymax^2 \int p(\bm{x})\left( \int \left| p(y|\bm{x}) - p(y|\xn{A}) \right| \mathrm{d}y \right)^2 \mathrm{d}\bm{x} \\
        & \le \sigma^2 + 2 \ymax^2 \int p(\bm{x}) D_{\mathrm{KL}} \left( p(\cdot|\bm{x}) \| p(\cdot|\xn{A}) \right)\mathrm{d}\bm{x} \\
        & = \sigma^2 + 2 \ymax^2 \score{A}.
    \end{align*}
The second inequality follows from Pinsker's inequality~\cite{pinsker1960information,csiszar1967information,kullback1967lower} by noting that $\int \left| p(y|\bm{x}) - p(y|\xn{A}) \right| \mathrm{d}y = 2 D_{\mathrm{TV}}(p(\cdot | \bm{x})\|p(\cdot | \xn{A}))$ is twice the total variation distance between $p(\cdot | \bm{x})$ and $p(\cdot | \xn{A})$.
\end{proof}

Theorem \ref{th:regression} tells us that the minimum possible MSE that we can achieve by predicting $Y$ only with the feature subset $\bar{A}$ can be bounded by the CMI between $Y$ and $\X{A}$, conditioned on $\Xn{A}$. This result formalizes the intuitive belief that whenever a subset of features $A$ has low relevancy or high redundancy (i.e., $\score{A}$ is small), such features can be safely removed without affecting the resulting prediction error too much. In fact, when $\score{A} = 0$, Theorem \ref{th:regression} proves that it is possible to achieve the irreducible MSE $\sigma^2$ without using any of the features in $A$.

Interestingly, this score accounts for both the relevancy of $\X{A}$ in the prediction of $Y$ and its redundancy with respect to the other features $\Xn{A}$. To better verify this fact, we can rewrite $\score{A}$ as:
\begin{equation}
\int p(\xn{A}) \int p(y,\x{A} | \xn{A})\log \frac{p(y,\x{A} | \xn{A})}{p(y | \xn{A})p(\x{A} | \xn{A})} \mathrm{d}\bm{x}\mathrm{d}y,
\end{equation}
and notice that the inner integral is zero whenever: i) $\xn{A}$ perfectly predicts $y$, i.e., $\x{A}$ is \emph{irrelevant}, or ii) $\xn{A}$ perfectly predicts $\x{A}$, i.e., $\x{A}$ is \emph{redundant}. In both cases we have $p(y,\x{A} | \xn{A}) = p(y | \xn{A})p(\x{A} | \xn{A})$ and, thus, $\score{A} = 0$.

\subsection{Regression Error in Linear Models}
\label{subsec:linear}
As previously mentioned the \emph{actual} error introduced by removing a set of features depends on the choice of the model class. We remark that Theorem \ref{th:regression} bounds the \emph{ideal} prediction error, i.e., the error achieved by a model of infinite capacity. Unfortunately, in practical applications the chosen model has often very limited capacity (e.g., linear). In such cases, our bound, and all CMI-based methods, might be over-optimistic. Indeed, there are situations in which CMI leads to discarding an apparently redundant feature which would reveal itself to be useful when considering the finite capacity of the chosen model. Let us consider the following example.

\begin{example}
\label{example}
Consider a regression problem with two features, $X_1$ and $X_2$, and target $Y = aX_1 + bX_2$, for two scalars $a$ and $b$. Furthermore, assume that $X_1 = Z$ and $X_2 = e^Z$, for $Z \sim \mathcal{N}(0,\sigma^2)$, with $\sigma^2 \gg 0$. It is clear that $\score{\{x_1\}} \simeq 0$ and $\score{\{x_2\}} \simeq 0$, since the two features can be perfectly recovered from one another. However, if the chosen model is linear, both features are fundamental for predicting $Y$. In fact, the squared Pearson correlation coefficients $\rho^2(X_1,Y)$ and $\rho^2(X_2,Y)$ are high, while $\rho^2(X_1,X_2)$ is small.
\end{example}

We show now that, when linear models are involved, the correlation between the features and between a feature and the target can be used to bound the regression error.

\begin{thr}
\label{thr:linear}
Let $\sigma^2_{\bm{X}\rightarrow Y} = \min_{\bm{w},b} \mathbb{E}_{\bm{X}, Y} \left[ \left(Y- \bm{w}^T \bm{X} - b\right)^2\right]$ be the minimum MSE of the linear model that predicts $Y$ with all the features and $(\bm{w}^*, b^*)$ be the optimal weights and bias. Let $A$ be a set of indices and ${\sigma^2_{\Xn{A} \rightarrow X_{i}} = \min_{\bm{w}_{i, \bar{A}}, \bm{b}_{i, \bar{A}}} \mathbb{E}_{X_{i},\Xn{A}} \left[ \left( X_{i} -  \bm{w}_{i, \bar{A}}^T \Xn{A} -{b}_{i, \bar{A}} \right)^2 \right]}$ be the minimum MSE of the linear model that predicts $X_{i}$ from the features
$\Xn{A}$. Then the minimum MSE of the linear model that predicts $Y$ from the features $\Xn{A}$ can be bounded as:
\begin{align*}
	\min_{\bm{w}_{\bar{A}}, b_{\bar{A}}}  \mathbb{E}_{\bm{X}, Y} & \left[ \left(Y- \bm{w}_{\bar{A}}^T \Xn{A} -b_{\bar{A}} \right)^2\right]^{\frac{1}{2}} \\
	& \le \sigma_{\bm{X}\rightarrow Y} + \sqrt{\left| A \right|}\sum_{i \in A} w^*_i \sigma_{\Xn{A} \rightarrow X_{i}}.
\end{align*}
Furthermore, let $\sigma^2_Y = \Var[Y]$. If $\rho(X_i, X_j) = 0$ for all $i,j \in A$ and $i \neq j$ and
 $\rho(X_i, X_j) = 0$ for all $i,j \in \bar{A}$ and $i \neq j$,\footnote{We are requiring that all features in $\X{A}$ are uncorrelated and that all features in $\Xn{A}$ are uncorrelated; but, of course, there might exist $i\in A$ and $j \in \bar{A}$ such that $\rho(X_i, X_j) \neq 0$.} then it holds that:
\begin{align*}
	\min_{\bm{w}_{\bar{A}}, b_{\bar{A}}} & \mathbb{E}_{\bm{X}, Y} \left[ \left(Y- \bm{w}_{\bar{A}}^T \Xn{A} -b_{\bar{A}} \right)^2\right]^{\frac{1}{2}}  \le  \sigma_{\bm{X}\rightarrow Y} \\
	& + \sqrt{\left| A \right|} \sigma_Y \sum_{i \in A} \rho(Y,X_i) \bigg(1 - \sum_{j \in \bar{A}} \rho(X_i, X_j)^2 \bigg)^{\frac{1}{2}}.
\end{align*}
\end{thr}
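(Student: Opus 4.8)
The plan is to prove both inequalities constructively: since the two left-hand sides are (square roots of) infima of the MSE over all linear predictors built on the surviving features $\Xn{A}$, it suffices to exhibit one such predictor and control its root-mean-square error. The natural candidate is obtained from the optimal full-feature predictor $\hat{Y} = \bm{w}^{*T}\bm{X} + b^*$ by replacing inside it every removed coordinate $X_i$ ($i \in A$) with its own least-squares reconstruction $\hat{X}_i = \bm{w}_{i,\bar{A}}^{*T}\Xn{A} + b^*_{i,\bar{A}}$ from the surviving features. The resulting map $\tilde{g}(\Xn{A}) = b^* + \sum_{j \in \bar{A}} w^*_j X_j + \sum_{i \in A} w^*_i \hat{X}_i$ is affine in $\Xn{A}$, hence a feasible competitor, so $\min_{\bm{w}_{\bar{A}},b_{\bar{A}}}\mathbb{E}[(Y - \bm{w}_{\bar{A}}^T\Xn{A} - b_{\bar{A}})^2]^{1/2} \le \mathbb{E}[(Y - \tilde{g}(\Xn{A}))^2]^{1/2}$.

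For the first inequality I would write $Y - \tilde{g}(\Xn{A}) = (Y - \hat{Y}) + \sum_{i \in A} w^*_i (X_i - \hat{X}_i)$ and apply Minkowski's inequality in $L^2(p)$: the first summand contributes $\mathbb{E}[(Y - \hat{Y})^2]^{1/2} = \sigma_{\bm{X}\rightarrow Y}$ by definition of the optimal full-feature model, while the second is handled by Cauchy--Schwarz against the all-ones vector of length $|A|$, namely $\big(\sum_{i \in A} w^*_i (X_i - \hat{X}_i)\big)^2 \le |A| \sum_{i \in A} (w^*_i)^2 (X_i - \hat{X}_i)^2$ pointwise; taking expectations, using $\mathbb{E}[(X_i - \hat{X}_i)^2] = \sigma^2_{\Xn{A}\rightarrow X_i}$, and bounding the resulting $\ell^2$-norm by the $\ell^1$-norm gives $\mathbb{E}[(\sum_{i \in A} w^*_i(X_i-\hat{X}_i))^2]^{1/2} \le \sqrt{|A|}\sum_{i \in A}|w^*_i|\,\sigma_{\Xn{A}\rightarrow X_i}$, which is the claimed bound.

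For the second inequality I would keep the same candidate $\tilde{g}$ and evaluate the two ingredients under the added uncorrelatedness hypotheses. Mutual uncorrelatedness of $\{X_j : j \in \bar{A}\}$ makes the reconstruction of $X_i$ separable into univariate regressions, so its coefficient of determination is $\sum_{j \in \bar{A}} \rho(X_i,X_j)^2$, and therefore $\sigma_{\Xn{A}\rightarrow X_i} = \sigma_{X_i}\big(1 - \sum_{j \in \bar{A}} \rho(X_i,X_j)^2\big)^{1/2}$. Mutual uncorrelatedness of $\{X_i : i \in A\}$ is then used to relate the full-regression weight $w^*_i$ to the elementary quantities $\Cov[Y,X_i]$ and $\Var[X_i]$, yielding $|w^*_i|\,\sigma_{X_i} = \sigma_Y\,|\rho(Y,X_i)|$; substituting both identities into the first inequality produces the stated correlation-based bound.

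I expect two steps to require the most care. The first is the treatment of the correction term: the reconstruction residuals $\delta_i := X_i - \hat{X}_i$ are each orthogonal to the span of the constants and of $\{X_j : j \in \bar{A}\}$, but they are not mutually orthogonal in general — a feature in $A$ may still be correlated with a feature in $\bar{A}$, which re-couples the $\delta_i$ through the $\hat{X}_i$ — so the crude $\sqrt{|A|}$ loss in the Cauchy--Schwarz step cannot be removed, and this non-orthogonality is precisely the phenomenon behind Example~\ref{example}. The second, and more delicate, step is the passage from $w^*_i$ to $\rho(Y,X_i)$ in the second part: this must be read off the block structure of the normal equations $\Cov[\bm{X}]\,\bm{w}^* = \Cov[\bm{X},Y]$, and the orthogonality of the $A$-block is exactly what is consumed there; I would set up these equations explicitly rather than treat the identification as routine, and isolate whatever additional structural requirement on the cross-block correlations the argument actually needs.
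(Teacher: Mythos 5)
Your proposal follows essentially the same route as the paper's proof: the same competitor (replace each removed $X_i$ inside the optimal full-feature predictor by its least-squares reconstruction from $\Xn{A}$), then Minkowski, Cauchy--Schwarz against the all-ones vector, and the $\ell^2$-to-$\ell^1$ bound for the first inequality, and for the second part the same two identities $\sigma_{\Xn{A}\rightarrow X_i}=\sigma_{X_i}\big(1-\sum_{j\in\bar{A}}\rho(X_i,X_j)^2\big)^{1/2}$ and $|w^*_i|\,\sigma_{X_i}=\sigma_Y|\rho(Y,X_i)|$. The one step you flag as delicate --- extracting $w^*_i=\Var[X_i]^{-1}\Cov[X_i,Y]$ from the normal equations under only within-block uncorrelatedness --- is the step the paper settles by simply asserting it from the diagonal $A$-block, without addressing the cross-block term $\Cov[\X{A},\Xn{A}]\bm{w}^*_{\bar{A}}$ that your caution correctly points to, so your reservation is not resolved by the paper's own argument either.
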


\begin{proof}
{\small
	Consider the linear regression problem for predicting $Y$ with all the features, $\min_{\bm{w}, b} \mathbb{E}_{Y, \bm{X}} \left[ \left( Y - \bm{w}^T \bm{X} - b \right)^2 \right]$, having $(\bm{w}^*, b^*)$ as the optimal solution. The expression of the optimal weights and the minimum MSE $\sigma^2$ are given by:
\begin{align*}
	\bm{w}^* = \Cov[\bm{X}]^{-1}\Cov[\bm{X},Y].
\end{align*}
\begin{equation*}
	\sigma^2_{\bm{X}\rightarrow Y} = \Var[Y] - \Cov[Y,\bm{X}] \Cov[\bm{X}]^{-1} \Cov[\bm{X},Y].
\end{equation*}
Consider now a partition of $\bm{X}$ into $\Xn{A}$ and $\X{A}$ and the linear regression problem to predict $\X{A}$ from $\Xn{A}$, \ie $\min_{\bm{W}_{A, \bar{A}}, \bm{b}_{A, \bar{A}}} \mathbb{E}_{\X{A},\Xn{A}} \left[ \left( \X{A} -  \bm{W}_{A, \bar{A}} \Xn{A} -\bm{b}_{A, \bar{A}} \right)^2 \right]$. We can express the optimal weights and the minimum MSE as:
\begin{align*}
	 \bm{W}_{A, \bar{A}}^* = \Cov[\X{A},\Xn{A}] \Cov[\Xn{A}]^{-1}.
\end{align*}
\begin{equation*}
	\sigma^2_{\Xn{A} \rightarrow X_{i}} = \Var[{X}_i] - \Cov[{X}_i,\Xn{A}] \Cov[\Xn{A}]^{-1} \Cov[\Xn{A},{X}_i]
\end{equation*}
Let us now consider the linear regression problem for predicting $Y$ from the features $\Xn{A}$.
	\begin{align}
		& \min_{\bm{w}_{\bar{A}}, b_{\bar{A}}} \mathbb{E}_{\bm{X},Y} \left[ \left( Y -  \bm{w}_{\bar{A}}^T \Xn{A} -b_{\bar{A}} \right)^2 \right]^{\frac{1}{2}}  \notag \\
		& \le \mathbb{E}_{\bm{X},Y} \left[ \left( Y -  {\bm{w}_{\bar{A}}^*}^T \Xn{A} -b^* \pm {\bm{w}_{A}^*}^T \left(\bm{W}_{A, \bar{A}}^* \Xn{A} -\bm{b}_{A, \bar{A}}^* \right) \right)^2 \right]^{\frac{1}{2}} \notag \\
		& \le \mathbb{E}_{\bm{X},Y} \left[ \left( Y -  {\bm{w}_{\bar{A}}^*}^T \Xn{A} -{\bm{w}_{{A}}^*}^T \X{A} -b^*\right)^2 \right]^{\frac{1}{2}} \notag \\
		& \quad + \mathbb{E}_{\bm{X},Y} \left[\left( {\bm{w}_{{A}}^*}^T \left( \X{A} - \bm{W}_{A, \bar{A}}^* \Xn{A} -\bm{b}_{A, \bar{A}}^* \right) \right)^2 \right]^{\frac{1}{2}} \label{p:1}\\
		& \le \sigma_{\bm{X}\rightarrow Y} + \sqrt{\left| A \right|} \mathbb{E}_{\bm{X}} \left[ \sum_{i \in A} {{w}_{i}^*}^2 \left( X_{i} - {\bm{w}_{i, \bar{A}}^*}^T \Xn{A} -{b}_{i, \bar{A}}^* \right)^2  \right]^{\frac{1}{2}} \label{p:2}\\
		& = \sigma_{\bm{X}\rightarrow Y} + \sqrt{\left| A \right|} \left( \sum_{i \in A} {w_{i}^*}^2  \mathbb{E}_{\bm{X}} \left[ \left( X_{i} - {\bm{w}_{i, \bar{A}}^*}^T \Xn{A} - {b}_{i, \bar{A}}^* \right)^2  \right] \right)^{\frac{1}{2}} \notag \\
		& \le\sigma_{\bm{X}\rightarrow Y} + \sqrt{\left| A \right|} \sum_{i \in A}   {{w}_{i}^*}  \mathbb{E}_{\bm{X}} \left[ \left( X_{i} - {\bm{w}_{i, \bar{A}}^*}^T \Xn{A} -{b}_{i, \bar{A}}^* \right)^2  \right]^{\frac{1}{2}} \label{p:3}\\
		& = \sigma_{\bm{X}\rightarrow Y} + \sqrt{\left| A \right|} \sum_{i \in A}   {w_{i}^*}  \sigma_{\Xn{A} \rightarrow X_{i}}, \notag
	\end{align}
	where \eqref{p:1} derives from Minkowski inequality after having summed and subtracted ${\bm{w}_{{A}}^*}^T \X{A}$, \eqref{p:2} is obtained from Cauchy-Schwarz inequality (for $d$ dimensional vectors we have $(\bm{a}^T\bm{b})^2 \le d \sum_{i=1}^d a_i^2 b_i^2$) and \eqref{p:3} derives from subadditivity of the square root.

By recalling that $ {w_{i}^*} = \sum_{j \in A} \Cov[X_i,X_j]^{-1} \Cov[X_j,Y]$, for uncorrelated features we get:
\begin{align*}
	{w_{i}^*} & = \Var[X_i]^{-1} \Cov[X_i,Y] = \left(\frac{\Var[Y]}{\Var[X_i]} \right)^{\frac{1}{2}}\rho(Y,X_i).
\end{align*} If the features in $\Xn{A}$ are uncorrelated as well, we have that $\Cov[\Xn{A}]$ is diagonal. Therefore, we have:
	\begin{align*}
	\sigma^2_{\Xn{A} \rightarrow X_{i}} & = \Var[{X}_i] - \sum_{j \in \bar{A}} \Cov[{X}_i, X_j]^2 \Var[X_j]^{-1} \\
	& = \Var[{X}_i] \bigg(1 - \sum_{j \in \bar{A}} \rho(X_i, X_j)^2 \bigg),
	\end{align*}
	from which the result follows directly.
}
\end{proof}

This result allows highlighting two relevant points. First, when considering linear models what matters is the correlation among the features and the correlation between the features and the class. Most importantly, the Pearson correlation coefficient is a weaker index of dependency between random variables compared to the MI as it identifies linear dependency only. As suggested by Example~\ref{example}, using MI for discarding features when the model used for prediction is too weak might be dangerous. Second, Theorem~\ref{thr:linear} highlights once again two relevant properties of the features. In the linear case, a feature $X_i$ is relevant if it is highly correlated with the target $Y$, \ie $\rho^2(Y,X_i) \gg0$, and a feature is redundant if it is highly correlated with the others, \ie $\rho^2(X_i, X_j) \gg 0$. Both these contributions appear clearly in Theorem~\ref{thr:linear}.

\subsection{Bounding the Classification Error}
\label{subsec:classification}
A similar result to Theorem \ref{th:regression} can be obtained for an ideal classification problem. Here the goal is to find the function $g^* \in \mathcal{G}_{\bar{A}}$ minimizing the ideal prediction loss,
\begin{equation}
\inf_{g \in \mathcal{G}_{\bar{A}}} \mathbb{E}_{\bm{X}, Y} \left[ \mathds{1}_{\left\{ Y \neq g(\Xn{A}) \right\}} \right],
\end{equation}
where $\mathds{1}_{E}$ denotes the indicator function over an event $E$.

\begin{thr} \label{th:classification}
Let $\epsilon = \mathbb{E}_{\bm{X}, Y} \left[ \mathds{1}_{\left\{ Y \neq \argmax_{y \in \mathcal{Y}} p(y | \bm{X}) \right\}} \right]$ be the Bayes error and $A$ be a set of indices, then the classification error obtained by removing features $\X{A}$ can be bounded as:
\begin{equation}
	\inf_{g \in \mathcal{G}_{\bar{A}}} \mathbb{E}_{\bm{X}, Y} \left[ \mathds{1}_{\left\{ Y \neq g(\Xn{A}) \right\}} \right] \le \epsilon + \sqrt{2 \score{A} }.
\end{equation}
\end{thr}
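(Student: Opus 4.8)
The plan is to mirror the structure of the proof of Theorem~\ref{th:regression}. First I would note that the infimum on the left-hand side is attained by the Bayes classifier built on the reduced feature set, $g^*(\xn{A}) = \argmax_{y \in \mathcal{Y}} p(y \mid \xn{A})$, because for any $g \in \mathcal{G}_{\bar{A}}$ the conditional risk is $\E_Y[\mathds{1}_{\{Y \neq g(\xn{A})\}} \mid \bm{X} = \bm{x}] = 1 - p(g(\xn{A}) \mid \bm{x})$, and $g(\xn{A})$ is constant given $\bm{x}$. Integrating against $p(\bm{x})$ and subtracting the analogous expression for the Bayes error, $\epsilon = \int p(\bm{x})\,\big(1 - p(y^*(\bm{x}) \mid \bm{x})\big)\,\mathrm{d}\bm{x}$ with $y^*(\bm{x}) = \argmax_{y}p(y\mid\bm{x})$, reduces the claim to showing that $\int p(\bm{x}) \big( p(y^*(\bm{x}) \mid \bm{x}) - p(g^*(\xn{A}) \mid \bm{x}) \big) \mathrm{d}\bm{x} \le \sqrt{2\,\score{A}}$.

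The core step is a three-term decomposition of the integrand:
\begin{align*}
& p(y^*(\bm{x}) \mid \bm{x}) - p(g^*(\xn{A}) \mid \bm{x}) = \big(p(y^*(\bm{x}) \mid \bm{x}) - p(y^*(\bm{x}) \mid \xn{A})\big) \\
& \quad + \big(p(y^*(\bm{x}) \mid \xn{A}) - p(g^*(\xn{A}) \mid \xn{A})\big) + \big(p(g^*(\xn{A}) \mid \xn{A}) - p(g^*(\xn{A}) \mid \bm{x})\big).
\end{align*}
The middle term is non-positive by the very definition of $g^*(\xn{A})$ as the maximiser of $p(\cdot \mid \xn{A})$, so it can be dropped. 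Each of the two surviving terms is the gap, at a single outcome, between the distributions $p(\cdot \mid \bm{x})$ and $p(\cdot \mid \xn{A})$, hence bounded in absolute value by $D_{\mathrm{TV}}(p(\cdot \mid \bm{x}) \| p(\cdot \mid \xn{A}))$ (the total variation distance dominates the discrepancy of any single event's probability). This gives the pointwise bound $p(y^*(\bm{x}) \mid \bm{x}) - p(g^*(\xn{A}) \mid \bm{x}) \le 2\, D_{\mathrm{TV}}(p(\cdot \mid \bm{x}) \| p(\cdot \mid \xn{A}))$.

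From here the finish is routine: applying Pinsker's inequality $D_{\mathrm{TV}}(p\|q) \le \sqrt{\tfrac{1}{2} D_{\mathrm{KL}}(p\|q)}$ pointwise and integrating against $p(\bm{x})$ yields an upper bound of $\sqrt{2}\int p(\bm{x})\sqrt{D_{\mathrm{KL}}(p(\cdot\mid\bm{x})\|p(\cdot\mid\xn{A}))}\,\mathrm{d}\bm{x}$; then Jensen's inequality (concavity of $\sqrt{\cdot}$) pulls the square root outside the expectation over $\bm{X}$, turning this into $\sqrt{2}\,\big(\int p(\bm{x}) D_{\mathrm{KL}}(p(\cdot\mid\bm{x})\|p(\cdot\mid\xn{A}))\,\mathrm{d}\bm{x}\big)^{1/2} = \sqrt{2\,\score{A}}$, using the identity $\int p(\bm{x}) D_{\mathrm{KL}}(p(\cdot\mid\bm{x})\|p(\cdot\mid\xn{A}))\,\mathrm{d}\bm{x} = \score{A}$ already established in the proof of Theorem~\ref{th:regression}. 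Adding $\epsilon$ back gives the stated bound. I expect the main obstacle to be the second paragraph: unlike the regression case, the reduced-feature classifier can output a label $g^*(\xn{A})$ different from the full-information Bayes label $y^*(\bm{x})$, so one cannot directly compare their conditional error probabilities; the telescoping trick — inserting $p(y^*(\bm{x})\mid\xn{A})$ and $p(g^*(\xn{A})\mid\xn{A})$ and discarding the sign-definite middle term via the optimality of $g^*$ — is precisely what makes the comparison work, and some care is needed so that the constant $2$ from the two surviving terms combines with the $\tfrac12$ from Pinsker to produce exactly $\sqrt{2\,\score{A}}$.
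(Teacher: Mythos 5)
Your proposal is correct and follows essentially the same route as the paper's proof: identify the reduced-feature Bayes classifier as the minimiser, compare it to the full Bayes classifier by inserting $p(\cdot\mid\xn{A})$ terms, bound the resulting gap pointwise by twice the total variation distance, and finish with Pinsker's inequality followed by Jensen's inequality to obtain $\sqrt{2\,\score{A}}$. Your three-term telescoping with the sign-definite middle term dropped via the optimality of $g^*(\xn{A})$ is just a slightly more explicit rendering of the paper's step bounding $\max_{y} p(y\mid\bm{x}) - \max_{y} p(y\mid\xn{A})$ by $\max_{y}\left|p(y\mid\bm{x}) - p(y\mid\xn{A})\right|$, so the two arguments coincide.
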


\begin{proof}
Let us denote by $y^* = \argmax_{y \in \mathcal{Y}} p(y | \bm{x})$ the optimal prediction given $\bm{x}$ and by $y^*_{\bar{A}} = \argmax_{y \in \mathcal{Y}} p(y | \xn{A})$ the optimal prediction given the subset of features in $\bar{A}$. We have:
	\begin{align*}
    	&\inf_{g \in \mathcal{G}_{\bar{A}}}\  \mathbb{E}_{\bm{X}, Y}  \left[ \mathds{1}_{\left\{ Y \neq g(\Xn{A}) \right\}} \right] = \mathbb{E}_{\bm{X}, Y}  \left[ \mathds{1}_{\left\{ Y \neq \argmax_{y \in \mathcal{Y}} p(y | \Xn{A}) \right\}} \right] \\
    	& = \mathbb{E}_{\bm{X}, Y}  \left[ \mathds{1}_{\left\{ Y \neq \argmax_{y \in \mathcal{Y}} p(y | \Xn{A}) \right\}} \pm \mathds{1}_{\left\{ Y \neq \argmax_{y \in \mathcal{Y}} p(y | \bm{X}) \right\}} \right] \\
        & = \epsilon + \int p(\bm{x}) \left( p(y^*| \bm{x}) -p(y^*_{\bar{A}}| \bm{x})  \right) \mathrm{d}\bm{x} \\
        & = \epsilon + \int p(\bm{x}) \left( p(y^*| \bm{x}) \pm p(y^*_{\bar{A}}| \xn{A}) -p(y^*_{\bar{A}}| \bm{x})  \right) \mathrm{d}\bm{x}.
    \end{align*}
Let us now bound the term inside the expectation point-wisely. For the term $p(y^*| \bm{x}) - p(y^*_{\bar{A}}| \xn{A})$, we have:
\begin{align*}
p(y^*| \bm{x}) - p(y^*_{\bar{A}}| \xn{A}) =& \max_{y \in \mathcal{Y}} p(y| \bm{x}) - \max_{y \in \mathcal{Y}}p(y| \xn{A})\\ \leq& \max_{y \in \mathcal{Y}} | p(y| \bm{x}) - p(y| \xn{A}) |\\ \leq&  D_{\mathrm{TV}}(p(\cdot | \bm{x})\|p(\cdot | \xn{A})).
\end{align*}
Following a similar argument for the term $p(y^*_{\bar{A}}| \xn{A}) -p(y^*_{\bar{A}}| \bm{x})$, we find that the inner term is always less or equal than the total variation distance between $p(\cdot | \bm{x})$ and $p(\cdot | \xn{A})$. Then, by applying Pinsker's inequality:
\begin{align*}
\inf_{g \in \mathcal{G}_{\bar{A}}}\ \mathbb{E}_{\bm{X}, Y} & \left[ \mathds{1}_{\left\{ Y \neq g(\Xn{A}) \right\}} \right] \leq \epsilon + 2 \int p(\bm{x}) D_{\mathrm{TV}}(p(\cdot | \bm{x})\|p(\cdot | \xn{A})) \mathrm{d}\bm{x}\\ & \le \epsilon + \int p(\bm{x}) \sqrt{2 D_{\mathrm{KL}} \left( p(\cdot | \bm{x}) \| p(\cdot | \xn{A}) \right) }\mathrm{d}\bm{x} \\
        & \le \epsilon + \sqrt{2  \int p(\bm{x}) D_{\mathrm{KL}} \left( p(\cdot | \bm{x}) \| p(\cdot | \xn{A}) \right) \mathrm{d}\bm{x} } \\
        & = \epsilon + \sqrt{2 \score{A} }.
\end{align*}
Here the last inequality follows from Jensen's inequality and the concavity of the square root.
\end{proof}

Similarly to the result for regression problems, Theorem \ref{th:classification} bounds the minimum ideal classification error achievable by a model which uses only the subset of features in $\bar{A}$ by the score $\score{A}$. The astute reader might have noticed a slightly better dependence on $\score{A}$ with respect to the regression case (square root versus linear). This is due to the fact that minimizing the MSE gives rise to a squared total variation distance between the conditional distributions $p(\cdot | \bm{x})$ and $p(\cdot | \xn{A})$, which leads to a linear dependence on $\score{A}$.


\section{Algorithms}\label{sec:algo}
In this section, we rephrase the forward and backward feature selection algorithms based on the findings of Section \ref{sec:theory}. Furthermore, we propose a novel stopping condition
that allows to bound the error introduced by removing a set of features, assuming the predictor will make the best possible use of the remaining features. Actively searching for the optimal subset of features is combinatorial in the number of features and, thus, unfeasible~\cite{john1994irrelevant}. Instead, we can start from the complete feature set and remove one feature at a time, greedily minimizing the score. In this spirit, we propose the following iterative procedure.
%
%
%
\begin{restatable}[Backward Elimination]{algo}{be}\label{algo:be}
	Given a dataset $\bm{X}, Y$, select a threshold $\delta\geq0$, the maximum error that the filter is allowed to introduce. Then:
	\begin{itemize}
		\item Start with the full feature set, \ie $A_1 = \emptyset$, where $A_t$ denotes the index set of features removed prior to step $t$.
		\item For each step $t=1,2\dots$, remove the feature that minimizes the conditional mutual information between itself and the target $Y$ given the remaining features, i.e.:
		\begin{align}
		&i_t =\arg\min_{i}I(Y;X_{i} | \Xn{A_t}\setminus X_{i}),\\
		&I_t = I(Y;X_{i_t} | \Xn{A_t}\setminus X_{i_t}),\label{eq:it}\\
		&A_{t+1} = A_t \cup \{i_t\} \label{eq:bes}
		\end{align}
		\item Stop as soon as $\sum_{h=1}^{t}I_h \geq \frac{\delta}{2\ymax^2}$ for regression and $\sum_{h=1}^{t}I_h \geq \frac{\delta^2}{2}$ for classification. The selected features are the remaining ones, indexed by $\overline{A}_T$, where $T$ is the last step.
	\end{itemize}
\end{restatable}
This algorithm, apart from the stopping condition, is described by Brown et al.~\cite{brown2012conditional} as \textit{Backward Elimination with Mutual Information}. The same authors show that this procedure greedily maximizes the conditional likelihood of the selected features given the target, as long as $I_k$ is always zero. This would correspond to selecting $\delta=0$ as a threshold in our algorithm. The same backward elimination step is used as a subroutine in the IAMB algorithm~\cite{tsamardinos2003algorithms}. Our stopping condition allows selecting the maximum error that the feature selection procedure is allowed to add to the ideal error, \ie the unavoidable error that even a perfect predictor using all the features would commit. The fact that the threshold will be actually observed is guaranteed by the following result.
\begin{restatable}{thr}{beth}\label{th:beth}
	Algorithm \ref{algo:be} achieves an error of $\sigma^2 + \delta $ for regression, where $\sigma^2$ is the irreducible error and $\epsilon + \delta$ for classification, where $\epsilon$ is the Bayes error.
\end{restatable}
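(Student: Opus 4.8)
The plan is to combine the telescoping of conditional mutual information, via the chain rule \eqref{eq:chain}, with the ideal-error bounds of Theorems \ref{th:regression} and \ref{th:classification}. Let $T$ be the step at which Algorithm \ref{algo:be} halts, so the set of removed indices is $A \coloneqq \{i_1,\dots,i_{T-1}\}$ and the selected features are those indexed by $\bar A = \{1,\dots,d\}\setminus A = \overline{A}_T$. The statement reduces to two facts: \textbf{(i)} $\score{A} = \sum_{h=1}^{T-1} I_h$, and \textbf{(ii)} $\sum_{h=1}^{T-1} I_h$ is strictly below $\tfrac{\delta}{2\ymax^2}$ in the regression case and below $\tfrac{\delta^2}{2}$ in the classification case. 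Granting these, Theorem \ref{th:regression} yields $\inf_{g\in\mathcal{G}_{\bar A}} \E_{\bm{X},Y}\big[(Y-g(\Xn{A}))^2\big] \le \sigma^2 + 2\ymax^2\score{A} < \sigma^2+\delta$, and Theorem \ref{th:classification} yields $\inf_{g\in\mathcal{G}_{\bar A}} \E_{\bm{X},Y}\big[\mathds{1}_{\{Y\neq g(\Xn{A})\}}\big] \le \epsilon + \sqrt{2\score{A}} < \epsilon + \delta$, which is exactly the claim, read as an upper bound on the ideal error of the selected subset.

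For \textbf{(i)} I would unfold $\score{A} = I(Y;\X{A}\mid\Xn{A})$ by peeling the removed features off in their elimination order. Write $\overline{A}_h = \{1,\dots,d\}\setminus\{i_1,\dots,i_{h-1}\}$ for the index set available at step $h$, so that $\overline{A}_h\setminus\{i_h\} = \overline{A}_{h+1}$ and $\X{A}$ collects exactly $X_{i_1},\dots,X_{i_{T-1}}$. Applying the conditional chain rule once gives
\begin{align*}
I\big(Y; X_{i_1},\dots,X_{i_{T-1}} \mid \Xn{A}\big)
&= I\big(Y; X_{i_1} \mid X_{i_2},\dots,X_{i_{T-1}},\Xn{A}\big) \\
&\quad{}+ I\big(Y; X_{i_2},\dots,X_{i_{T-1}} \mid \Xn{A}\big),
\end{align*}
and the first summand equals $I(Y;X_{i_1}\mid\bm{X}_{\overline{A}_1\setminus\{i_1\}}) = I_1$ because $\{i_2,\dots,i_{T-1}\}\cup\bar A = \{1,\dots,d\}\setminus\{i_1\}$. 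Iterating the same peeling on the trailing term, at the $h$-th stage the conditioning set of the detached term is $\{i_{h+1},\dots,i_{T-1}\}\cup\bar A = \overline{A}_h\setminus\{i_h\}$, so that term is exactly $I_h$ as defined in \eqref{eq:it}; summing over $h$ gives $\score{A}=\sum_{h=1}^{T-1}I_h$. This is a finite induction on $|A|$.

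Fact \textbf{(ii)} follows directly from the stopping rule: the algorithm halts at the first step at which the cumulative score reaches the threshold, so at step $T-1$ it had not yet been reached, \ie $\sum_{h=1}^{T-1}I_h < \tfrac{\delta}{2\ymax^2}$ for regression and $\sum_{h=1}^{T-1}I_h < \tfrac{\delta^2}{2}$ for classification. (If instead the procedure exhausts all features, the same inequality still holds, now with $\bar A = \emptyset$; Theorems \ref{th:regression}--\ref{th:classification} remain valid in that degenerate case, the ideal errors being $\Var[Y]$ and the best constant-classifier error, respectively.) Substituting (ii) into the bounds produced in (i) closes the proof.

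The only delicate point is the bookkeeping in \textbf{(i)}: one must verify that each per-step score $I_h$, which conditions on the features present at step $h$ except $X_{i_h}$, is precisely what the chain-rule peeling detaches, \ie that no already-eliminated feature re-enters the conditioning set and no still-present feature drops out. Once the index sets $\overline{A}_h$ are tracked carefully this is routine, and the remainder is a direct substitution into results already established.
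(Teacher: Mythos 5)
Your proposal is correct and follows essentially the same route as the paper: it invokes Theorems \ref{th:regression} and \ref{th:classification} with $A=A_T$, shows via the chain rule \eqref{eq:chain} that $\score{A_T}=\sum_{h=1}^{T-1}I_h$ (the paper derives this as the recursion $\score{A_{t+1}}=\score{A_t}+I_t$ and unrolls it, which is the same telescoping you perform by peeling the removed features in elimination order), and then bounds this sum by the threshold using the stopping rule. The only cosmetic difference is your strict inequality versus the paper's $\le$, which does not affect the conclusion.
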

\begin{proof}
	We prove the result for regression using Theorem \ref{th:regression}. The proof for classification is analogous, but based on Theorem \ref{th:classification}. We have:
	\begin{align}
		\inf_{g\in \mathcal{G}_{\bar{A}}} \mathbb{E}_{\bm{X}, Y} \left[ \left(Y-g(\Xn{A_t}) \right)^2\right] \le \sigma^2 + 2\ymax^2 \score{A_t},\label{eq:4.1.1}
	\end{align}
	where $t$ is any iteration of the algorithm. By repeatedly applying the chain rule of CMI \eqref{eq:chain}, we can rewrite the score as:
	\begin{align}
		\score{A_{t+1}}
		&= I(Y;\X{A_{t+1}} | \Xnn{A}{t+1}) \nonumber\\
		&= I(Y; \bm{X}) - I(Y; \Xnn{A}{t+1}) \nonumber\\
		&= I(Y; \X{A_t},\Xn{A_t}) - I(Y; \Xnn{A}{t+1}) \nonumber\\
		&= I(Y; \X{A_t} | \Xn{A_t}) + I(Y; \Xn{A_t}) - I(Y; \Xnn{A}{t+1}) \nonumber\\
		&= \score{A_t} + I(Y; X_{i_t},\Xnn{A}{t+1}) - I(Y; \Xnn{A}{t+1}) \nonumber\\
		&= \score{A_t} + I(Y; X_{i_t} | \Xnn{A}{t+1}) \pm I(Y; \Xnn{A}{t+1}) \nonumber\\
		&= \score{A_t} + I(Y; X_{i_t} | \Xn{A_{t}}\setminus X_{i_t}) \nonumber\\
		&= \score{A_t} + I_t.\label{eq:4.1.2}
	\end{align}
	Noting that $\nu(A_1) = I(Y; \emptyset | \bm{X}) = 0$, we can unroll this recursive equation, obtaining:
	\begin{align}\label{eq:4.1.3}
		\nu(A_T) = \sum_{t=1}^{T-1}I_t \leq \frac{\delta}{2\ymax^2},
	\end{align}
	where the inequality is due to the stopping condition. Plugging (\ref{eq:4.1.3}) into (\ref{eq:4.1.1}), we get the thesis.

\end{proof}
%
%

%

Our Theorems~\ref{th:regression} and~\ref{th:classification} suggest that a backward elimination procedure allows keeping the
error controlled. In the following, we argue that we can resort also to forward selection methods and still have a guarantee
on the error. Using the chain rule of the CMI we can express our score $\score{A}$ as:
\begin{equation*}
	\score{A} = I(Y;\bm{X}) - I(Y;\Xn{A}),
\end{equation*}
where $\Xn{A}$ is the set of features that have not been eliminated yet. If we plug this equation into the bounds of Theorems~\ref{th:regression} and~\ref{th:classification} we get:
\vspace{2mm}\\
\resizebox{.5\textwidth}{!}{
$\displaystyle
	\inf_{g \in \mathcal{G}_{\bar{A}}} \mathbb{E}_{\bm{X}, Y} \left[ \left(Y-g(\Xn{A}) \right)^2\right] \le \sigma^2 + 2\ymax^2 \left[ I(Y;\bm{X}) - I(Y;\Xn{A}) \right],$
}
\resizebox{.5\textwidth}{!}{
$\displaystyle
	\inf_{g \in \mathcal{G}_{\bar{A}}} \mathbb{E}_{\bm{X}, Y} \left[ \mathds{1}_{\left\{ Y \neq g(\Xn{A}) \right\}} \right] \le \epsilon + \sqrt{2 \left[I(Y;\bm{X}) - I(Y;\Xn{A})\right] },$
}
\vspace{0.5mm}\\
for the regression and classification cases respectively.
Since $I(Y;\bm{X})$ does not depend on the selected features $\Xn{A}$, in order to minimize the bound we need to maximize the term $I(Y;\Xn{A})$. This matches the intuition that we should select the features that provide the maximum information on the class. Using this result, we can easly
provide a forward feature selection algorithm.

\begin{restatable}[Forward Selection]{algo}{be}\label{algo:fs}
	Given a dataset $\bm{X}, Y$, select a threshold $\delta\geq0$, the maximum error that the filter is allowed to introduce. Then:
	\begin{itemize}
		\item Start with the empty feature set, \ie $A_1 = \emptyset$, where $A_t$ denotes the index set of features selected prior to step $t$.
		\item For each step $t=1,2\dots$, add the feature that maximizes the conditional mutual information between itself and the target $Y$ given the remaining features, i.e.:
		\begin{align}
		&i_t =\arg\max_{i}I(Y;X_{i} | \X{A_{t}}),\\
		&I_t = I(Y;X_{i_t} | \X{A_t}),\label{eq:it}\\
		&A_{t+1} = A_t \cup \{i_t\} \label{eq:bes}
		\end{align}
		\item Stop as soon as $\sum_{h=1}^{t}I_h \ge \frac{\delta}{2\ymax^2}$ for regression and $\sum_{h=1}^{t}I_h \ge \frac{\delta^2}{2}$ for classification. The selected features are those indexed by $A_T$, where $T$ is the last step.
	\end{itemize}
\end{restatable}

Apart from the stopping condition, this algorithm was also presented in Brown et al.~\cite{brown2012conditional} and named \textit{Forward Selection with Mutual Information}. Like for the backward case, we are able to provide a guarantee on the final error.

\begin{restatable}{thr}{beth}\label{th:fs}
Algorithm \ref{algo:fs} achieves an error of $\sigma^2 - \delta + 2B^2 I(Y;\bm{X}) $ for regression, where $\sigma^2$ is the irreducible error and $\epsilon - \delta + \sqrt{2I(Y;\bm{X})}$ for classification, where $\epsilon$ is the Bayes error.
\end{restatable}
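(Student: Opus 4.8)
The plan is to combine the chain rule of CMI~\eqref{eq:chain} with Theorems~\ref{th:regression} and~\ref{th:classification}, mirroring the proof of Theorem~\ref{th:beth} but tracking the information carried by the \emph{retained} features rather than the score of the \emph{discarded} ones. Let $S$ be the index set of the features kept by Algorithm~\ref{algo:fs} and $\bar{S}$ its complement. Since a predictor may use only $\X{S}$, Theorem~\ref{th:regression} applied with index set $\bar{S}$ (playing the role of ``$A$'') gives
\[
\inf_{g}\;\E_{\bm{X},Y}\left[\left(Y-g(\X{S})\right)^2\right]\;\le\;\sigma^2+2\ymax^2\,\score{\bar{S}},\qquad\text{with }\score{\bar{S}}=I(Y;\X{\bar{S}}\mid\X{S}),
\]
and, by the chain rule, $I(Y;\bm{X})=I(Y;\X{S})+I(Y;\X{\bar{S}}\mid\X{S})$, so $\score{\bar{S}}=I(Y;\bm{X})-I(Y;\X{S})$. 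Everything therefore reduces to lower bounding the information $I(Y;\X{S})$ accumulated at termination.

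For the latter I would unroll the recursion already used in~\eqref{eq:4.1.2}: with $A_{t+1}=A_t\cup\{i_t\}$, the chain rule gives $I(Y;\X{A_{t+1}})=I(Y;\X{A_t},X_{i_t})=I(Y;\X{A_t})+I(Y;X_{i_t}\mid\X{A_t})=I(Y;\X{A_t})+I_t$; since $I(Y;\X{A_1})=0$ (the empty feature set carries no information), this telescopes to $I(Y;\X{S})=\sum_h I_h$, the sum ranging over the steps performed. The stopping rule halts Algorithm~\ref{algo:fs} precisely when $\sum_h I_h$ first reaches $\delta/(2\ymax^2)$ for regression (respectively $\delta^2/2$ for classification), so the returned set obeys $I(Y;\X{S})\ge\delta/(2\ymax^2)$ (respectively $\ge\delta^2/2$). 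Plugging $\score{\bar{S}}=I(Y;\bm{X})-I(Y;\X{S})\le I(Y;\bm{X})-\delta/(2\ymax^2)$ into the displayed inequality lets the factor $2\ymax^2$ absorb the threshold and leaves $\sigma^2-\delta+2\ymax^2 I(Y;\bm{X})$, the regression claim. The classification claim follows identically from Theorem~\ref{th:classification}, $\inf_g\E[\mathds{1}_{\{Y\ne g(\X{S})\}}]\le\epsilon+\sqrt{2\,\score{\bar{S}}}$: substituting $\score{\bar{S}}\le I(Y;\bm{X})-\delta^2/2$ yields a bound of the form $\epsilon+\sqrt{2I(Y;\bm{X})-\delta^2}$, of the same shape as stated but with the $\delta$ term sitting under the square root, as it does not detach additively the way it does in the linear-in-score regression bound.

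I do not anticipate a real obstacle; the only delicate points are bookkeeping. First, one must make sure that the feature inserted at the final step is counted in $S$, so that $\sum_h I_h$ truly attains the threshold and we get a \emph{lower} bound on $I(Y;\X{S})$ --- the mirror of the backward case, where minimality of the stopping time instead gave an \emph{upper} bound on $\score{A_T}$. Second, the statement is meant for the regime $\delta\le 2\ymax^2 I(Y;\bm{X})$ for regression (respectively $\delta\le\sqrt{2I(Y;\bm{X})}$ for classification); outside it the stopping rule never fires, the algorithm keeps all features, $\score{\emptyset}=0$, and the error equals $\sigma^2$ (respectively $\epsilon$), whereas the claimed right-hand side would fall below the irreducible error. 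Modulo these caveats, the result is a one-line consequence of Theorem~\ref{th:regression} (resp.\ Theorem~\ref{th:classification}) together with two applications of the chain rule~\eqref{eq:chain}.
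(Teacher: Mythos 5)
Your proposal is correct and follows essentially the same route as the paper's proof: the same chain-rule recursion $I(Y;\X{A_{t+1}}) = I(Y;\X{A_t}) + I_t$, unrolled from $I(Y;\X{A_1})=0$ and combined with the stopping condition to lower-bound $I(Y;\X{A_T})$, then plugged into the bounds of Theorems~\ref{th:regression} and~\ref{th:classification} rewritten via $\score{\bar S} = I(Y;\bm{X}) - I(Y;\X{A_T})$. Your two caveats are also well taken and go beyond the paper, which only writes out the regression case and calls classification ``analogous'': the substitution in fact yields $\epsilon + \sqrt{2I(Y;\bm{X}) - \delta^2}$, and since $\sqrt{2I(Y;\bm{X})-\delta^2} \ge \sqrt{2I(Y;\bm{X})}-\delta$ whenever $\delta \le \sqrt{2I(Y;\bm{X})}$, the theorem's stated form $\epsilon - \delta + \sqrt{2I(Y;\bm{X})}$ does not literally follow from this argument; similarly, both bounds implicitly assume the threshold is actually reached before the feature set is exhausted.
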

\begin{proof}
	We prove the result just for the regression case, as the derivation for classification is analogous. Using the chain rule~\eqref{eq:chain}, we have the following recursion:
	\begin{align*}
		I(Y; \bm{X}_{A_{t+1}}) & = I(Y; \bm{X}_{A_{t}} , X_{i_t} ) \\
		& = I(Y; X_{i_t} | \bm{X}_{A_{t}} ) + I(Y; \bm{X}_{A_{t}}) \\
		& = I_t + I(Y; \bm{X}_{A_{t}}).
	\end{align*}
	By observing that $I(Y;\X{A_1}) = I(Y; \emptyset) = 0$, we unroll the recursion and we get
	\begin{equation*}
		I(Y; \bm{X}_{A_{T}}) = \sum_{t=1}^T I_t \ge \frac{\delta}{2\ymax^2},
	\end{equation*}
	from which the result follows.
\end{proof}

\subsection{Estimation of the Conditional Mutual Information}
So far, we have assumed to be able to compute the CMI $I(Y; X_{i} | \Xn{A_t} \setminus X_{i})$ and $I(Y;X_{i} | \X{A_t})$ exactly. In practice, they need to be estimated from data. Estimating the MI can be reduced to the estimation of several entropies~\cite{paninski2003estimation}; numerous methods have been employed in feature selection, either based on nearest neighbors approaches~\cite{tsimpiris2012nearest} or on histograms~\cite{brown2012conditional}. The main challenge arise in classification where we need to estimate CMI between a discrete variable (the class) and possibly continuous features. For this reason, we resort to the recent nearest neighbor estimator proposed by~\cite{gao2017estimating}, which collapses to the more traditional KSG estimator~\cite{kraskov2004ksg} when both $X$ and $Y$ have a continuous density. These estimators are proved to be consistent when the number of samples and the number of neighbors grows to infinity~\cite{gao2017estimating}.

\section{Related Works}\label{sec:related-works}

A related theoretical study of feature selection via MI has been recently proposed by Brown et al. \cite{brown2012conditional}. The authors show that the problem of finding the minimal feature subset such that the conditional likelihood of the targets is maximized is equivalent to minimizing the CMI. Based on this result, common heuristics for information-theoretic feature selection can be seen as iteratively maximizing the conditional likelihood. Similarly, we show a connection between the CMI and the optimal prediction error. Differently from \cite{brown2012conditional}, we additionally propose a novel stopping condition that is well motivated by our theoretical findings.

In the information theory literature, \cite{wu2012functional} also analyzes the connection between CMI and minimum mean square error, deriving a similar result to our Theorem \ref{th:regression}. However, classification problems (i.e., minimum zero-one loss) are not considered and the focus is not on feature selection.

The authors of \cite{tsimpiris2012nearest} propose a nearest neighbor estimator for the CMI and show how it can be used in a classic forward feature selection algorithm. One of the authors' questions is how to devise a suitable stopping condition for such methods. Here we propose a possible answer: our stopping criterion (Section \ref{sec:algo}) is intuitive, applicable to both forward and backward algorithms, and theoretically well-grounded.

Several existing approaches use linear correlation measures to score the different features \cite{das1971feature,hall1999correlation,yu2003feature,biesiada2007feature,eid2013linear}.
Such algorithms are mostly based on the heuristic intuition that a good feature should be highly correlated with the class and lowly correlated with the other features. Instead, we provide a more theoretical justification for this claim (Section \ref{sec:theory}), showing a connection between these two properties and the minimum MSE.

\section{Experiments}\label{sec:experiments}
We evaluate the performance of our stopping condition on synthetic and real-world datasets, by comparing different stopping criteria, employing a backward feature selection approach:
\begin{itemize}
\item \textit{error (ER)}: stop when the bound on the prediction error, as in Theorem \ref{th:beth}, is greater than a fixed threshold $\delta$;
\item \textit{feature score (FS)}: stop when a feature with a CMI score greater than a fixed threshold $\delta$ is encountered;
\item \textit{delta feature score ($\Delta$FS)}: stop when the difference between the score of two consecutive features is greater than a threshold $\delta$ (as in knee-elbow analysis);
\item \textit{number of features (\#F)}: stop with exactly $k$ features are selected.
\end{itemize}
For all the experiments, we use Python's scikit-learn implementation of SVM with default parameters (RBF kernel and $C=1$).

\subsection{Synthetic Data}
The synthetic data consist in several binary classification problems. Each dataset is composed of $500$ samples. The datasets are generated, similarly to \cite{chen2017kernel}, as follows: fix the number of useful features $k$ (\ie the number of features that are actually needed to predict the class); given $Y=1$, $X_1, \dots X_k$ are $\mathcal{N}(0, 1)$ conditioned on $\sum_{i=1}^k X_i > 3(k-2)$, while $X_{k+1}, \dots X_{15} \sim  \mathcal{N}(0, 1)$. The choice of $k$ will be specified for each experiment.

\begin{figure*}[t]
\includegraphics[width=\textwidth]{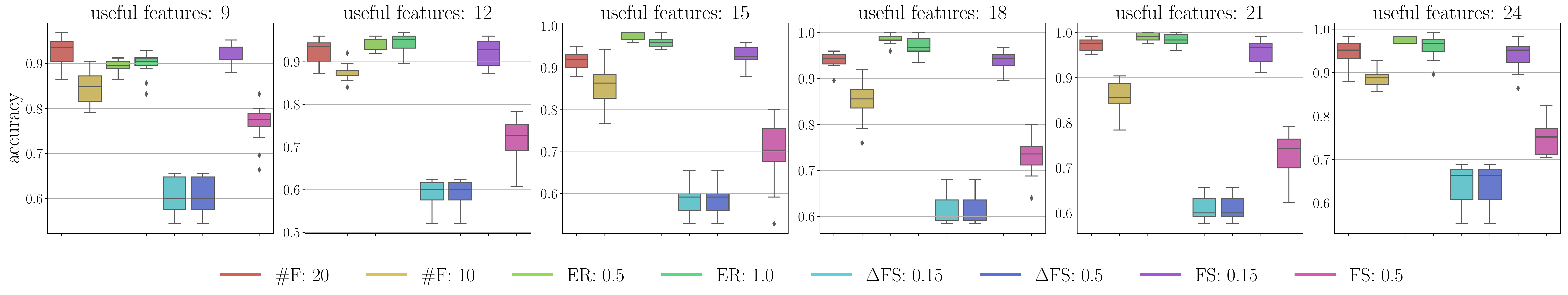}
\caption{SVM test accuracy for different choices of the number of features that generate the problem and different stopping criteria.}
\label{fig:accuracy_by_features}
\end{figure*}

\textit{Stopping Condition Comparison. }
The first experiment is meant to compare the stopping conditions presented above across datasets for classification with different a number of useful features. We generate $6$ independent problems with $30$ features. Among the $30$ features only $k \in \{ 9, 12, 15, 18, 21, 24 \}$ are useful to predict the target.
In Figure \ref{fig:accuracy_by_features}, we show the accuracy of SVM
for the different datasets and different stopping conditions. We can see that our stopping condition (ER) performs better than choosing a fixed number of features (\#F) in most cases. More notably, the feature selection algorithm shows a greater robustness w.r.t the stopping condition's hyper parameter with our error-based criterion, as one would expect.
Furthermore, in Figure \ref{fig:accuracy_by_features}, we notice that the delta feature score ($\Delta$FS) (which is similar to the knee-elbow analysis) is highly inefficient (as the outputs are almost identical for both choices of the threshold) and is clearly the worst performer.
The feature score (FS) stopping criterion is highly sensitive to the threshold, achieving good performance with a low threshold and a significantly worse performance when the threshold is increased.
The choice of the threshold in both $\Delta$FS and FS poses a significant problem as it has no relation to the prediction error and its optimal value is highly problem-specific. On the contrary, for \#F and ER criteria the hyper parameter has a precise meaning and thus it can be selected more easily.

\textit{Robustness. }
To have a better grasp of the proposed stopping criterion we generate $50$ binary classification problems, with the only difference of choosing $k$ accordingly to  ${k \sim \text{Uniform}(3, 15)}$ and having only $15$ total features.
In Figure \ref{fig:accuracy_and_selected} we show the accuracy of a SVM classifier
on a test set, after the feature selection has been performed, as a function of the error threshold $\delta$. Moreover, we overlay the fraction of selected features over the original $15$. We can notice two interesting facts.
\begin{enumerate*}[label=\roman*)]
\item Even with a threshold close to zero\footnote{Since the CMI is estimated from data as well, we cannot set the threshold to exactly $0$, thus, we used $0.05$ in the experiments.} a great number of features is discarded.
\item The classification accuracy is rather constant despite a high error threshold while the number of selected features decreases significantly.
\end{enumerate*}
We can conclude that our method was effectively able to identify irrelevant features and discard them.

\textit{CMI Estimation. } To see how the estimation of the (conditional) mutual information impacts on the performance of the stopping condition, we consider one last problem, generated as before with $N=30$ features and fixed $k=10$. In Figure \ref{fig:binary_big}, we look at the performance of an SVM classifier on the same test set for increasing sizes (number of samples) of the training set. We select the number of neighbors in the mutual information estimation as a fixed fraction of the training set size.
Notice how, when the data points are too few, the estimated mutual information \quotes{overfits}, and actually very little to no features are discarded in the feature selection step. As a consequence, also the SVM classifier overfits the training set and leads to poor performance on the test set.
On the other hand, as the number of samples increases, the estimation of the mutual information becomes more precise and the appropriate set of features is selected, resulting in a great increase in the classification accuracy on the test set.
Moreover, for a small number of data points, the number of neighbors used in the MI estimation is not too relevant, while it is evident that for a large enough sample size, it is better to increase the number of neighbors.

\begin{figure*}[t] 
  \begin{minipage}[b]{0.5\linewidth}
    \centering
    \includegraphics[width=\linewidth]{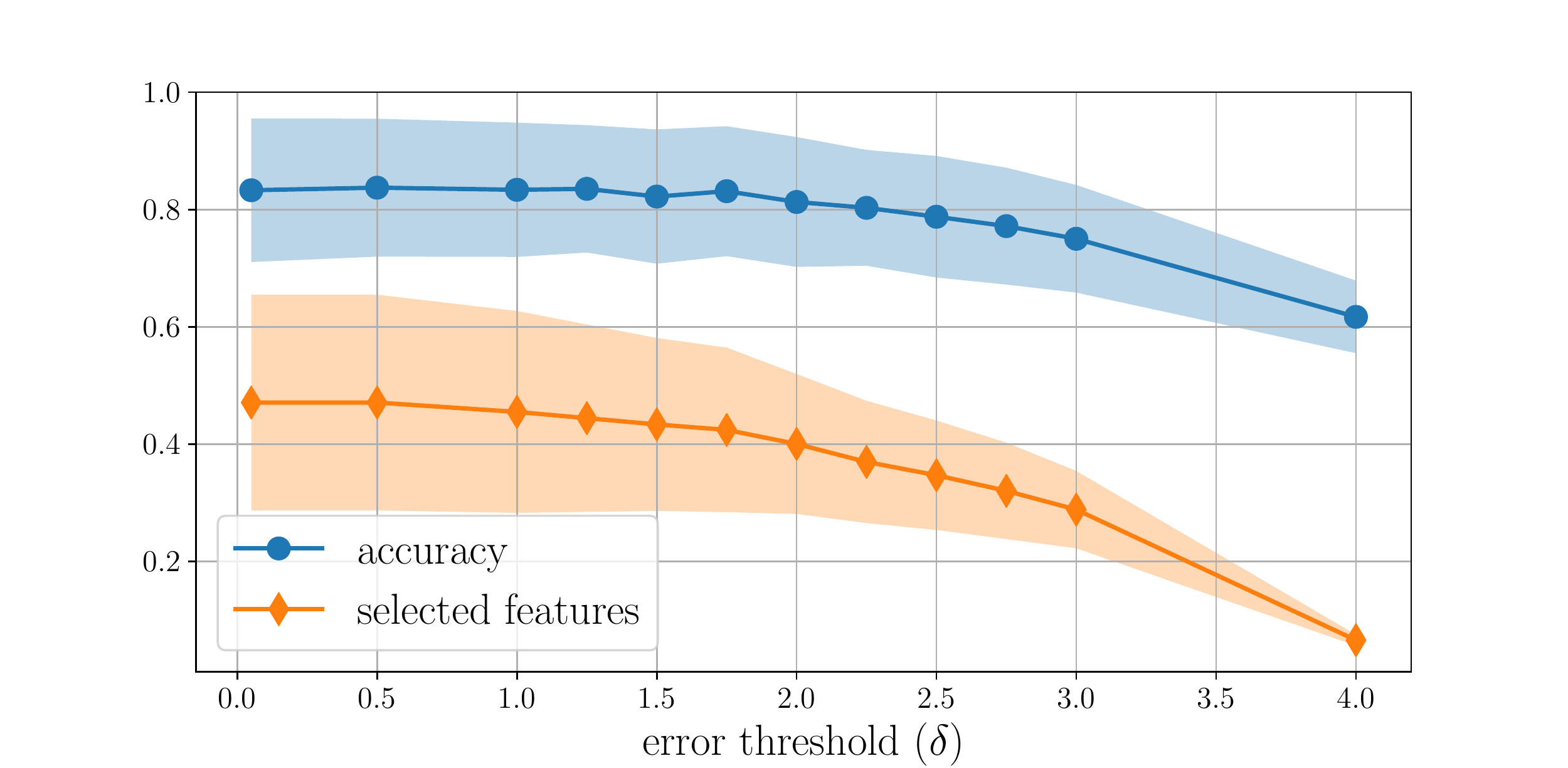}
\caption{Classification accuracy and fraction of selected features as a function of the error threshold $\delta$. Estimates are reported as mean values $\pm$ standard deviation.}
\label{fig:accuracy_and_selected}
  \end{minipage}
  \hspace{0.5cm}
  \begin{minipage}[b]{0.5\linewidth}
    \centering
    \includegraphics[width=\linewidth]{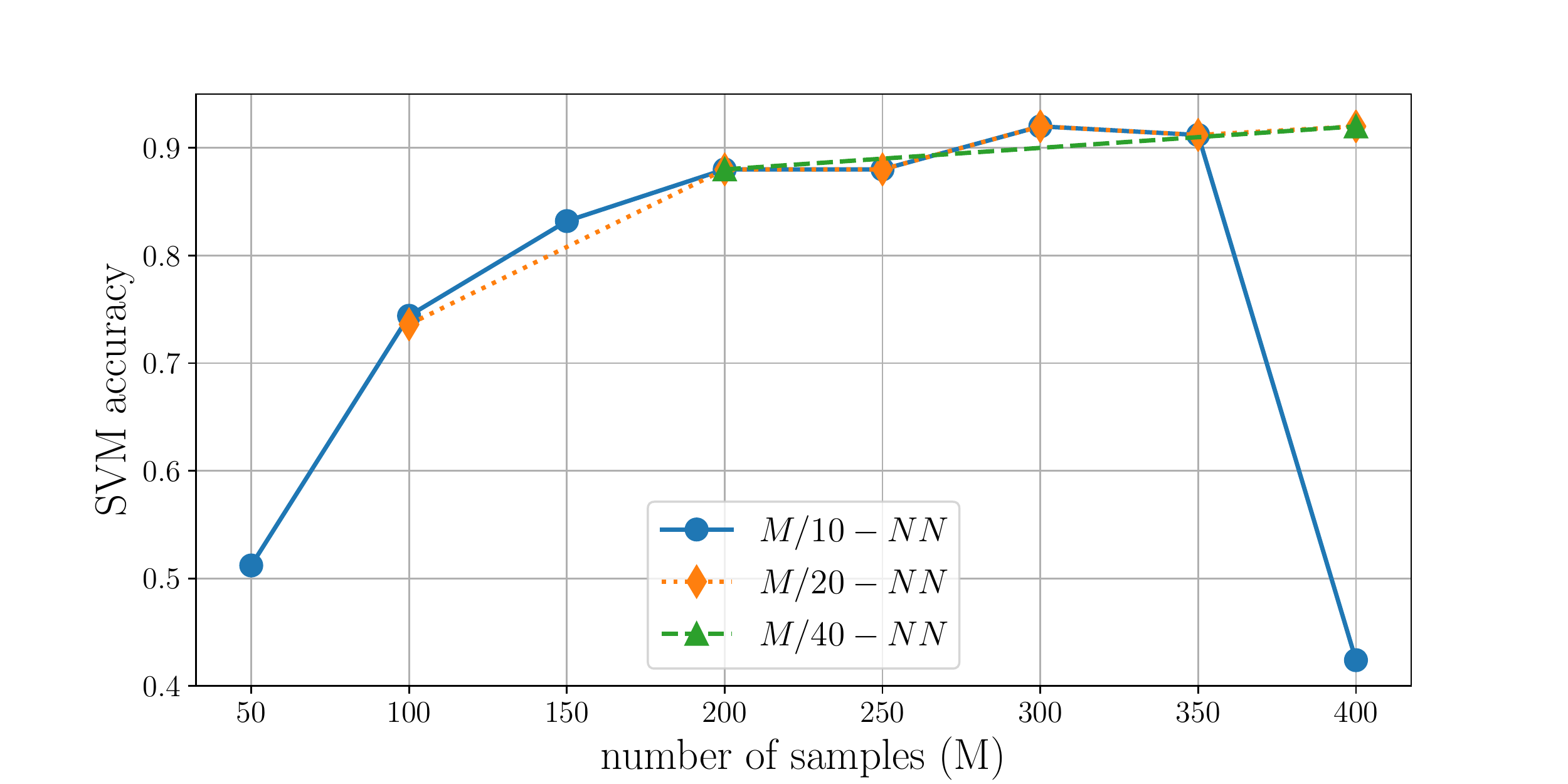}
\caption{Classification accuracy on a binary classification dataset, generated with $30$ features and $k=10$, for different values of number of samples and number of neighbors used for estimating the MI.}
\label{fig:binary_big}
    \label{fig:chainAll} 
  \end{minipage} 
\end{figure*}

\subsection{Real-World Data}
We further tested the proposed feature selection algorithm on several popular real world datasets, publicly available on the ASU feature selection website and the UCI repository~\cite{Dua:2017}.
In Table \ref{tab:real_data_res}, we report the classification accuracy on a test set after the feature selection procedure for different values of the threshold $\delta$.
We notice how the upper bound on the error is stricter in some examples and larger in others. In particular, the actual classification accuracy follows the theoretical error bound in cases where the dataset has a bigger number of samples and a number of features that is not too big, for example ORL. Conversely, if the number of features is too big in comparison to the number of samples, the error bound tends to be pessimistic and the actual accuracy is much bigger than the expected one (warpAR10P, ALLAML). Interestingly enough, the number of classes does not play a significant role.

\begin{table}
\caption{Real Data Results.}
\label{tab:real_data_res}
\centering
\begin{tabular}{cccccc}
\hline
Dataset & $\delta=0.05$ & $\delta=0.1$ & $\delta=0.25$ & $\delta=0.5$ & $\delta=1.0$ \\
\hline
ORL & 0.8  & 0.75  & 0.7 & 0.7375 & 0.7125 \\
warpAR10P & 0.97 & 0.98 &  0.98 &  0.98 & 0.98 \\
glass* & 0.99 & 0.99 & 0.99 & 0.99 & 0.99 \\
wine & 0.96 & 0.96 & 0.96 & 0.95 & 0.83 \\
ALLAML & 1.0 & 1.0 & 1.0 & 0.92 & 0.78 \\
\hline
\end{tabular}
\small *: no feature removed
\end{table}

\section{Discussion and Conclusion}
Conditional Mutual Information is an effective statistical tool to perform feature selection via filter methods. In this paper, we proposed a novel theoretical analysis showing that using CMI allows to control the ideal prediction error, assuming that the trained model has infinite capacity. This is a
rather new insight, as filter methods are typically employed when no assumptions are made on the 
underlying trained model. We proved that, when using linear models, the correlation
coefficient becomes a suitable criterion for ranking and selecting features. On the bases of our findings, we proposed a new stopping condition, that can be applied to both forward and backward feature selection, with theoretical guarantees on the prediction error.
The experimental evaluation showed that, compared against classical filter methods and stopping criteria, our approach, besides the theoretical foundation, is less sensitive to the choice of the threshold hyper-parameter and allows reaching state-of-the-art results.
%

\end{document}